\newcommand{\bs}[1]{\boldsymbol{#1}}
\author{Anonymous
}
\newtheorem{thm}{Theorem}
\newtheorem{lem}{Lemma}
\newtheorem{pro}{Property}
\newtheorem{definition}{Definition}
\newtheorem{rmk}{Remark}
\title{Double-Linear Thompson Sampling for Context-Attentive Bandits}
\author{
 Djallel Bouneffouf$^1$, Raphaël Féraud$^2$, Sohini Upadhyay$^1$, Yasaman Khazaeni$^1$ and Irina Rish$^3$ \\
  IBM Research AI\\
  \texttt{$^1$\{firstname.lastname\}@ibm.com} \\
  \texttt{$^2$\{raphael.feraud@orange.com} \\
  \texttt{$^3$\{irina.rish@mila.quebec} \\
}
\begin{document}
\maketitle
\begin{abstract}
In this paper, we analyze and  extend an online learning framework known as   {\em Context-Attentive Bandit},  motivated by various practical applications, from medical diagnosis to dialog systems,  where due to  observation costs only a small subset of a potentially large number of  context variables can be  observed at each iteration; however,  the agent has a freedom to  choose which variables to observe.  We derive a novel algorithm, called {\em Context-Attentive Thompson Sampling  (CATS)}, which builds upon the Linear Thompson Sampling approach,  adapting it to {\em Context-Attentive Bandit}  setting.  We provide a theoretical regret analysis and an extensive empirical evaluation demonstrating  advantages of the proposed approach over several baseline methods  on a variety of real-life datasets.
\end{abstract}

\section {Introduction}

The {\em contextual bandit} problem is a variant of the extensively studied multi-armed bandit problem ~\cite{LR85,28,UCB,lin2018}, where at each iteration,   the agent observes an $N$-dimensional {\em context} ({\em feature vector}) and uses it,  along with the rewards of the arms played in the past, to decide which arm to play   \cite{langford2008epoch,29,27,BouneffoufF16}.
The objective of the agent is to learn the 
relationship between the context and reward, in order to find the best arm-selection policy for maximizing cumulative reward  over time.

 Recently, a promising variant  of contextual bandits, called a {\em Context Attentive Bandit (CAB)} was  proposed in \cite{BouneffoufRCF17}, where no context is given by default, but the agent can request  to  observe (to focus its "attention" on) a limited number of context variables at each iteration.  
We propose here an extension of this problem setting:  a small subset of $V$ context variables is revealed  at each iteration (i.e. partially observable context), followed by the agent's choice of  additional $U$  features, where   $V$,  $U$, and the set of $V$ immediately observed features are  fixed at all iterations. 
The agent must learn to select both the best  additional features  and, subsequently,   the best arm to play, given the resulting $V+U$ observed features. (The original {\em Context Attentive Bandit} corresponds to $V=0$.)

The proposed setting is motivated by several real-life applications.
For instance, in a clinical setting, a doctor may first take a look at patient's medical record (partially observed context)  to decide which medical test (additional context variables) to perform, before choosing a treatment plan (selecting an arm to play). 
It  is often  too costly or even impossible to conduct all possible tests (i.e., observe the full context); therefore, given the limit on the number of tests,  the doctor must decide which subset of tests will result into maximally effective treatment choice (maximize the reward).
Similar problems can arise in multi-skill orchestration for AI agents. For example, in dialog orchestration, a user's query is first directed to a number of domain-specific agents, each providing a different response,   and then the best response is selected. However, it might be too costly to request the answers from {\em all} domain-specific experts, especially in  multi-purpose dialog systems with a very large number of  domains experts. Given a limit on the number of experts to use for each query, the orchestration agent must choose the best subset of experts to use.  In this application, the query is the immediately observed part of the overall context, while the responses of domain-specific experts are the initially unobserved features from which a limited subset must be selected and observed, before choosing an arm, i.e. deciding on  the best out of the available responses. For multi-purpose dialog systems, such as, for example, personal home assistants, retrieving features or responses from every domain-specific agent is computationally expensive or intractable, with the potential to cause a poor user experience, again underscoring the need for effective feature selection. 

Overall, the main contributions of this paper include: (1) a generalization of {\em Context Attentive Bandit}, and a first lower bound for this problem,
(2) an algorithm called {\em Context Attentive Thompson Sampling} for stationary and non-stationary environments, and its regret bound in the case of stationary environment,  and  (3) an extensive empirical evaluation demonstrating advantages of our proposed algorithm over the previous context-attentive bandit approach \cite{BouneffoufRCF17}, on  a range of datasets in both stationary and non-stationary settings.


\section{Related Work}
\label{sec:related}

The contextual bandit (CB) problem has been extensively studied in the past, and a variety of solutions have been proposed. In LINUCB ~\cite{13,abbasi2011improved,LiCLW11,survey2019}, Neural Bandit \cite{AllesiardoFB14} and in linear Thompson Sampling ~\cite{AgrawalG13,BalakrishnanBMR19,BalakrishnanBMR19j}, a linear dependency is assumed between the expected reward given the context and an action taken after observing this context; the representation space is modeled using a set of linear predictors. However, the context is assumed to be {\em fully observable}, which is not the case in this work. 
Motivated by dimensionality reduction tasks, \cite{YadkoriPS12} studied a sparse variant of stochastic linear bandits, where only a relatively small and unknown subset of features is relevant to a multivariate function optimization. It presents an application to the problem of optimizing a function that depends on many features, where only a small, initially unknown subset of features is relevant.
Similarly, \cite{CarpentierM12} also considered high-dimensional stochastic linear bandits with sparsity. There the authors combined  ideas from compressed sensing and bandit theory to derive a novel algorithm. In \cite{oswal2019linear}, authors explores a new form of the linear bandit problem in which the algorithm receives the usual stochastic rewards as well as stochastic feedback about which features are relevant to the rewards and propose an algorithm that can achieve $ O( \sqrt{T})$ regret, without prior knowledge of which features are relevant.
In \cite{bastani2015online}, the problem is formulated as a multi-arm bandit (MAB) problem with high-dimensional covariates, and a new efficient bandit algorithm based on the LASSO estimator is presented. However, {\em the above work, unlike ours, assumes  fully observable  context variables}, which is not always the case in some  applications, as discussed in the previous section.
In \cite{BouneffoufRCF17} the authors proposed the novel framework of {\em contextual bandit with restricted context}, where observing the  whole feature vector at each iteration is too costly or impossible for some reasons; however, the  agent can  request to observe the values of an arbitrary subset of features within a given budget, i.e. the limit on the number of features observed.  This paper explores a more general problem,  and unlike \cite{BouneffoufRCF17}, we provide a theoretical analysis of the proposed problem and the proposed algorithm.

The {\em Context Attentive Bandit} problem is related to the {\em budgeted learning} problem, where a learner can access only a limited number of attributes from the training set or from the test set (see for instance \cite {CSO2011}). In \cite {FKK2016}, the authors studied the {\em online budgeted learning} problem. They showed a significant negative result: for any $\delta > 0$ no algorithm can achieve regret bounded by $O(T^{1-\delta})$ in polynomial time.  For overcoming this negative result, an additional assumption is necessary. Here, following \cite {DG2014}, we assume that the expected reward of selecting a subset of features is the sum of the expected rewards of selecting individually the features. We obtain an efficient algorithm, which has a linear algorithmic complexity in terms of time horizon.

 \section{Context Attentive Bandit Problem}
\label{sec:statement}
We now introduce the problem setting, outlined in Algorithm \ref{problem}. 
Let $C$ be a set of $N$ features.
At each time point $t$ the environment generates a feature vector $\mathbf{c}(t)=(c_1(t),...,c_N(t),1) \in \mathbf{\mathds{R}}^{N+1}$,
which the agent cannot  observe fully, but a partial observation of the context is allowed: the values of a subset of $V<N$ {\em observed} features $C^V \subset C$, are revealed: $\bs{c}^V=(c^V_1(t),...,c^V_N(t),1) \in \mathds{R}^{N+1}$,   $\forall i \in C, c^V_i(t)=c_i(t)\mathds{1}{\{i \in C^V\}}$. 
Based on this partially observed context, the agent is allowed to request an additional subset of $U$ {\em unobserved} features $C^U \subset C \setminus C^V$, $V+U \leq N$.
The goal of the agent is  to maximize its total reward over time via (1) the optimal choice of the additional set of features $C^U$, given the initial observed features $\mathbf{c}^V(t)$, and (2) the optimal choice of an   arm $k \in [K]=\{1,...,K\}$ based on $\mathbf{c}^{V+U}(t)=(c^{V+U}_1(t),...,c^{V+U}_N(t),1) \in \mathds{R}^{N+1}$, $\forall i \in C,  c^{V+U}_i(t)=c_i(t)\mathds{1}{\{i \in C^V \lor i \in C^U\}}$. We assume $P_r(r | \bs{c},k)$, an unknown probability distribution of the reward given the context and the action taken in that context. In the following the expectations are taken over the probability distribution $P_r(r | \bs{c},k)$. 
\begin{algorithm}[tbh]
	\caption{Context Attentive Bandit Problem (CAB)}
	\label{alg:CBP}
	\begin{algorithmic}[1]
		\FOR{$t:=1$ to $T$}
		\STATE {\bfseries } Context $\bs{c}(t)$ is chosen by the environment
		\STATE {\bfseries } The values $\bs{c}^V(t)$ of a subset $C^V \subset C$ are revealed 
 	\STATE {\bfseries } The agent selects  a subset  $C^U \subseteq C \setminus C^V$
\STATE{\bfseries} The values $\bs{c}^U(t)$ are revealed;  
\STATE{\bfseries} The agent chooses an arm $k(t):= \pi(\bs{c}^{V+U}(t)) $
				\STATE {\bfseries } The reward $r_{k(t)}$ is sampled from distribution $P_r(r|\bs{c},k)$ and it is revealed
		\STATE {\bfseries } The agent updates the policy $\pi \in \Pi_{C^{V+U}}$
			\STATE {\bfseries } 
			\ENDFOR
	\end{algorithmic}
	\label{problem}
\end{algorithm}

\noindent{\bf The contextual bandit problem.}
Following \cite{langford2008epoch}, this problem is defined as follows.
At each time point $t \in \{1,...,T\}$, an agent is presented with a {\em context} (i.e. {\em feature vector}) $\textbf{c}(t) \in \mathbf{\mathds{R}}^{N+1}$
  before choosing an arm $k  \in [K]$.
Let ${\bf r(t)} = (r_{1}(t),...,$ $r_{K}(t)) \in \mathds{R}^K$ denote a reward vector, where $r_k(t)$ is the reward at time $t$  associated with the arm $k$.
Let $\pi: \mathbf{\mathds{R}}^{N+1} \rightarrow [K]$ denote a policy,
mapping a context  $\bs{c}(t) \in \bs{\mathds{R}}^{N+1}$ into an arm $k \in [K]$.   We assume that the expected reward is a linear function of the context.

\textbf{Assumption 1 (linear contextual bandit):} \textit{Whatever the subset of selected features $C^U \subset C \setminus C^V$ the expected reward is a linear function of the context:
$\mathds{E}[r_k | \bs{c}^{V+U}(t)]= \textbf{c}^{V+U}(t)^{\top} \bs{\mu}_k$, where $\bs{\mu}_k \in \mathds{R}^{N+1}$ is an unknown parameter, $\forall k \in [K]$ $||\bs{\mu}_k|| \leq 1$, and $||\bs{c}^{V+U}(t)|| \leq 1$.}

\noindent{\bf Contextual Combinatorial Bandit.}
The contextual combinatorial bandit problem \cite{qin2014contextual} can be viewed as a game where the agent sequentially observes a context $\bs{c}^V(t)$, selects a subset $C^{U}(t)\subset C \setminus \{C^V\}$ and observes the reward corresponding to the selected subset. The goal is to maximize the reward over time.
Let $r_{U}|\textbf{c}^V(t),\pi \in \mathds{R}$  be the reward associated with the set of selected features $C^U$ knowing the context vector $\mathbf{c}^{U+V}(t)$ and the policy $\pi$.
We have 
$r_{U}|\textbf{c}^V(t),\pi = r_{k(t)}|\textbf{c}^{V+U}(t)$, where 
$k(t)=\pi(\bs{c}^{V+U}(t))$.
Each feature $i \in C^U$ is associated with the corresponding random variable $r_{i}|\textbf{c}^V(t),\pi\in \mathds{R}$ 
which indicates the reward obtained when choosing the $i$-th feature at time $t$.

\textbf{Assumption 2 (linear contextual combinatorial bandit):} 
\textit{
the mean reward of selecting the set of features 
$C^U \subset C \setminus \{C^V\}$ is:  $\mathds{E}[r_{U}|\textbf{c}^V(t),\pi]=\sum_{i\in C^U} \mathds{E}[r_{i}|\textbf{c}^V(t),\pi]$, 
 and the expectation of the reward of selecting the feature $i$ is a linear function of the context vector $\textbf{c}^V(t)$: $\mathds{E}[r_{i}|\textbf{c}^V(t),\pi] =  \textbf{c}^V(t)^{\top}  \bs{\theta}_i$,
where $\bs {\theta}_i\in \mathds {R}^{N+1}$ is an unknown weight vector associated with the feature $i$, $\forall i \in C \setminus C^V$ $||\bs{\theta}_i|| \leq 1$, and $||\bs{c}^{V}(t)|| \leq 1$.
}
Let $\Pi_{C^{V+U}}$ be the set of linear policies such that only the features coming from $C^{V+U}$ are used, where $C^V$ is a fixed subset of $C$, and
$C^U$ is any subset of $C \setminus C^V$.
The objective of {\it Contextual Attentive Bandit} (Algorithm \ref {alg:CBP}) is to find an optimal  policy $\pi^* \in \Pi_{C^{V+U}}$, over $T$ iterations or time points,  so that the total reward is maximized.  
\begin{definition}[Optimal Policy for CAB]
The optimal policy $\pi^*$ for handling the CAB problem is selecting the arm at time $t$:
$k^*(t) = \arg\max_{k \in [K], C^U \in C \setminus C^V} \bs{c}^{U+V}(t)^\top \bs{\mu}_k= \arg\max_{k \in [K]}  \bs{c}^{V+U^*}(t)^\top \bs{\mu}_k$,
where $C^{U^*}=\arg\max_{C^U \subset C \setminus C^V} r_{U}| \bs{c}^V(t),\bs{\mu}_1,...,\bs{\mu}_K=  \arg\max_{C^U \subset C \setminus C^V} \sum_{i \in C^U} \bs{c}^V(t)^\top \bs{\theta}_i$.
\end{definition} 

\begin{definition}[Cumulative regret] \label {regret}
The cumulative regret over $T$ iterations of the policy $\pi \in \Pi_{C^{V+U}}$, is defined as $R(T) = \sum ^{T}_{t=1} \mathds{E} [r_{\pi^*(\mathbf{c}(t))}] - \sum^{T}_{t=1} \mathds{E} [r_{\pi(\mathbf{c}(t))}]$.
\end{definition}

\begin{pro}[Regret decomposition] \label {regret_dec} The cumulative regret over $T$ iterations of the policy $\pi \in \Pi_{C^{V+U}}$ can be rewritten as following:
\begin{align*}
R(T) = & \sum ^{T}_{t=1} \mathds{E} [r_{\pi^*(\mathbf{c}(t))}] - \sum^{T}_{t=1} \mathds{E} [r_{\pi(\mathbf{c}(t))}] = \sum_{t=1}^T\left[
\bs{c}^{V+U^*}(t)^\top \bs{\mu}_{k^*(t)}-\bs{c}^{V+U}(t)^\top \bs{\mu}_{k(t)}\right] \nonumber 
\end{align*}

\begin{align*}
= & \sum_{t=1}^T\left[
\bs{c}^{V+U^*}(t)^\top \bs{\mu}_{k^*(t)}-\bs{c}^{V+U^*}(t)^\top \bs{\mu}_{k(t)}\right]  
 + \sum_{t=1}^T\left[
\bs{c}^{V+U^*}(t)^\top \bs{\mu}_{k(t)}-\bs{c}^{V+U}(t)^\top \bs{\mu}_{k(t)}\right] \nonumber 
\end{align*}
\begin{align*}
= & \sum_{t=1}^T\left[ \bs{c}^{V+U^*}(t)^\top \bs{\mu}_{k^*(t)}-\bs{c}^{V+U^*}(t)^\top \bs{\mu}_{k(t)}\right] + \sum_{t=1}^T \left[ 
\mathds{E}[r_{U^*}|\bs{c}^V(t),\pi] -
\mathds{E}[r_{U(t)}|\bs{c}^V(t),\pi] \right] 
\end{align*}
\begin{align*}
= & \sum_{t=1}^T\left[ \bs{c}^{V+U^*}(t)^\top \bs{\mu}_{k^*(t)}-\bs{c}^{V+U^*}(t)^\top \bs{\mu}_{k(t)}\right]  
 + \sum_{t=1}^T\left[   \sum_{i \in C^{U^*}} \textbf{c}^V(t)^{\top} \bs{\theta}_i -  \sum_{i \in C^{U(t)}} \textbf{c}^V(t)^{\top} \bs{\theta}_i \right]. 
\end{align*}
\end {pro}

\begin{rmk}\label{remark11}
CAB problem generalizes {\em contextual bandit with restricted context} problem (\cite{BouneffoufRCF17}). Indeed, when the subset of observed context $C^V$ is empty, the reward of selecting the feature $i$ is given by $\theta_{i,N+1}$, which is the coordinate $N+1$ of the vector $\bs{\theta}_i$.
\end{rmk}

Before introducing an algorithm for solving the above CAB problem, we will derive a lower bound on the expected regret of any algorithm used to solve this problem.

\begin{thm}\label{theorem1}
For any policy $\pi \in \Pi_{C^{V+U}}$  solving Context Attentive Bandit problem (Algorithm 1) under Assumption 1.1 and 1.2, there exists probability distribution $P_c(c)$ and $P_{r}(c,k)$, such that
the lower bound of the regret accumulated by $\pi$ over $T$ iterations is :
\begin{eqnarray*}
 \Omega \left( \sqrt {(U+V)T}  + U\sqrt {VT}\right).
\end{eqnarray*}
\end{thm}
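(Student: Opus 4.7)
The plan is to leverage the regret decomposition in Property~\ref{regret_dec}, which cleanly separates $R(T)$ into an arm-selection component
\[ R_1(T) = \sum_{t=1}^T \bigl[\bs{c}^{V+U^*}(t)^\top \bs{\mu}_{k^*(t)} - \bs{c}^{V+U^*}(t)^\top \bs{\mu}_{k(t)}\bigr] \]
and a feature-selection component
\[ R_2(T) = \sum_{t=1}^T \Bigl[\sum_{i \in C^{U^*}} \bs{c}^V(t)^\top \bs{\theta}_i - \sum_{i \in C^{U(t)}} \bs{c}^V(t)^\top \bs{\theta}_i \Bigr]. \]
I would then exhibit a single joint distribution $(P_c, P_r)$ under which $\mathds{E}[R_1(T)] = \Omega(\sqrt{(V+U)T})$ and $\mathds{E}[R_2(T)] = \Omega(U\sqrt{VT})$, so that $R(T) = R_1(T) + R_2(T)$ inherits the sum of the two lower bounds.

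For $R_1(T)$, I would reduce to a stochastic linear contextual bandit of dimension $d = V+U$: once the extra subset $C^U$ is fixed, the conditional problem is identical to the standard linear contextual bandit in which each arm $k$ carries its own unknown parameter $\bs{\mu}_k$ and the per-round context $\bs{c}^{V+U}(t)$ lives in $\mathds{R}^{V+U}$. Transferring the standard $\Omega(\sqrt{dT})$ minimax lower bound for this setting (a Kullback--Leibler / needle-in-a-haystack argument over $\{\pm \varepsilon\}^{V+U}$ reward directions) immediately yields $\Omega(\sqrt{(V+U)T})$, regardless of which $C^U$ the algorithm picked.

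For $R_2(T)$, I would partition the $N-V$ unrevealed features into $U$ disjoint groups of roughly equal size and design the $\{\bs{\theta}_i\}$ so that exactly one feature per group is $\varepsilon$-better than its groupmates, with the identity of the better feature hidden inside a randomized hypothesis on a $V$-dimensional parameter direction. Since Assumption~2 makes the combinatorial reward additive across the selected features and the groups are independent, identifying the best representative of each group is an independent $V$-dimensional stochastic linear bandit with horizon $T$; a Le Cam two-point bound yields $\Omega(\sqrt{VT})$ per group, and summing over the $U$ groups gives $\Omega(U\sqrt{VT})$.

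The main obstacle will be arranging both hard instances on the same $(P_c, P_r)$ without the two sources of difficulty interacting or cancelling. I would address this by supporting the $\bs{\mu}_k$ perturbations on the coordinates indexed by $C^V \cup C^{U^*}$ while supporting the $\bs{\theta}_i$ perturbations on an orthogonal block of context directions, so that the per-round KL-divergences between competing instances decompose additively across the two sub-problems. Pinsker's inequality then propagates independently on each block, and combining the two contributions via the regret decomposition delivers the claimed $\Omega\bigl(\sqrt{(V+U)T} + U\sqrt{VT}\bigr)$ lower bound for every policy $\pi \in \Pi_{C^{V+U}}$.
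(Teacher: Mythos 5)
Your proposal follows essentially the same route as the paper: it invokes the regret decomposition of Property~\ref{regret_dec}, lower-bounds the arm-selection term by the $\Omega(\sqrt{(U+V)T})$ minimax bound for linear bandits in dimension $U+V$, and lower-bounds the feature-selection term by viewing it as $U$ independent linear bandits in dimension $V$. The paper's own proof is a two-line citation of these reductions, so your additional detail on constructing the hard instances on orthogonal coordinate blocks only makes the argument more explicit, not different in kind.
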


\begin {proof}
The left term of the regret (see Property \ref {regret_dec}) is lower bounded by the lower bound of linear bandits in dimension $U+V$ (Theorem 2 in \cite{ChuLRS11}), while
the right term is lower bounded by the lower bound of $U$ linear bandits in dimension $V$.
\end {proof}

\section{Context Attentive Thompson Sampling (CATS) }
\begin{algorithm}[ht]
 \caption{Context Attentive Thompson Sampling (CATS)}
\label{alg:CATSO}
\begin{algorithmic}[1]
 \STATE {\bfseries }\textbf{Require:} 
 $N$, $V$, $U$, $K$, $T$, $C^V$, $\alpha >0$, $\lambda(t)$
 \STATE {\bfseries }\textbf{Initialize:} $\forall k \in [K], A_k:=I_{N+1}$, $\bs{g}_k := \bs{0}_{N+1}$, $\hat{\bs{\mu}_k}:= \bs{0}_{N+1}$, and $\forall i \in [N]$, $B_i:=I_{N+1}$, $\bs{z}_i:=\bs{0}_{N+1}$, $\hat{\bs{\theta}_i}:=\bs{0}_{N+1}$.
 \STATE {\bfseries }\textbf{Foreach} $t= 1, 2, . . . ,T$ \textbf{do}
 \STATE \quad observe  $\bs{c}^V(t)$  
 \STATE {\bfseries } \quad \textbf{Foreach} context feature $i \in C \setminus C^V$ \textbf{do}
 \STATE \quad \quad  Sample $\Tilde{\bs{\theta}}_i$  from $\mathcal{N}(\hat{\theta}_i, \alpha^2 B_i^{-1})$
\STATE {\bfseries } \quad\textbf{End do}
\STATE \quad Sort $(\bs{c}^V(t)^\top \Tilde{\bs{\theta}}_1,... \bs{c}^V(t))^\top \Tilde{\bs{\theta}}_N)$ in decreasing order
\STATE \quad Select $C^{U}(t):=\{i \in C \setminus C^V, \bs{c}^V(t)^\top \Tilde{\bs{\theta}}_i \geq \bs{c}^V(t)^\top \Tilde{\bs{\theta}}_V \}$
 \STATE  {\bfseries }\quad
observe values $\bs{c}^{V+U}(t)$ 

 \STATE {\bfseries }\quad\textbf{Foreach} arm $k= 1,...,K$ \textbf{do} 
 \STATE \quad\quad Sample $\Tilde{\bs{\mu}}_k$ from $\mathcal{N}(\hat{\bs{\mu}}_k, \alpha^2 A_k^{-1})$ distribution
\STATE \quad \textbf{End do}
 \STATE {\bfseries }\quad Select arm $k(t):= \arg\max_{k\subset [K] }  \bs{c}^{V+U}(t)^\top \Tilde{\bs{\mu}}_k$
 \STATE {\bfseries }\quad Observe $r_{k}(t)$
\STATE \quad$A_k:= A_{k}+ \bs{c}^{V+U}(t)\bs{c}^{V+U}(t)^\top $, \quad $\bs{g}_k := \bs{g}_k + \bs{c}^{V+U}(t)r_{k}(t)$, \quad $\hat{\bs{\mu}}_k := A_k^{-1} \bs{g}_k$
\STATE \quad \textbf{Foreach} $i \in C^U$
\STATE \quad\quad$B_i:= \lambda(t) B_{i}+ \bs{c}^V(t)\bs{c}^V(t)^{\top} $, \quad $\bs{z}_i: = \bs{z}_i + \bs{c}^V(t)r_{k}(t)$,\quad$\hat{\bs{\theta}}_i := \lambda(t) B_i^{-1} \bs{z}_i$
 \STATE {\bfseries }\quad \textbf{End do}
 \STATE {\bfseries }\textbf{End do}
\end{algorithmic}
\end{algorithm}

We now propose an algorithm for solving the CAB problem, called {\em Context-Attentive Thompson Sampling (CATS)}, and summarize it in Algorithm \ref{alg:CATSO}.
The basic idea of CATS is to use linear Thompson Sampling \cite {AgrawalG13} for solving the $U$ linear bandit problems for selecting the set of additional relevant features $C^{U^*}$ knowing $\bs{c}^V(t) \in C^V$, and for selecting the best arm knowing $\bs{c}^{V+U^*}(t) \in C^{V+U^*}$. Linear Thompson Sampling assumes a Gaussian prior for the likelihood function, which corresponds in CAB for arm $k$ to $\Tilde{\bs{\mu}}_k \sim \mathcal{N}(\bs{c}^{V+U}(t)^\top \hat{\bs {\mu}}_k, \alpha^2)$, and for feature $i$ to, $\Tilde{\bs{\theta}}_i \sim \mathcal{N}(\bs{c}^{V}(t)^\top \hat{\bs {\theta}}_i, \alpha^2)$. Then the posterior at time $t+1$ are respectively $P(\Tilde {\bs{\mu}}_k|r_k(t)) \sim \mathcal {N}(\hat{\bs{\mu}}_k,\alpha^2A^-1_k)$ for arm $k$, and $P(\Tilde {\bs{\mu}}_k|r_k(t)) \sim \mathcal {N}(\hat{\bs{\theta}}_i,\alpha^2B^-1_k)$ for feature $i$.

The algorithm takes the total number of features $N$, the number of features initially observed $V$, the number of additional features to observe $U$, the set of observed features $C^V$, the number of actions $K$, the time horizon $T$, the  distribution parameter $\alpha >0$ used in linear Thompson Sampling, and a function of time $\lambda(t)$, which is used for adapting the algorithm to non-stationary linear bandits.

At each iteration $t$ the values $\bs{c}^V(t)$ of features in the subset $C^V$ are observed (line 4 Algorithm \ref {alg:CATSO}). Then the vector parameters $\Tilde{\bs{\theta}}_i$ are sampled for each feature $i \in C^V$ (lines 5-7) from the posterior distribution (line 6). Then the subset of best estimated features at time $t$ is selected (lines 8-9).
Once the feature vector $\bs{c}^{U(t)}$ is observed line 10, Linear Thompson Sampling is applied in steps 11-15 to choose an arm. When the reward of selected arm is observed (line 15) the parameters are updated lines 16-19.

\begin{rmk} [Algorithmic complexity]\label{remark2} At each time set, Algorithm \ref {alg:CATSO} sorts a set of size $V$ and inverts $U+1$ matrices in dimensions $N+1$ that leads to an  algorithmic complexity  in $O (V \log V +(U+1)(N+1)^2) T$.
\end{rmk}

Due to assumption 2, CATS algorithm benefits from a linear algorithmic complexity, overcoming the negative result stated in \cite {FKK2016}. Before providing  an upper bound of the regret of CATS in $\Tilde {O} (\sqrt {T}$) ($\Tilde O$ hides logarithmic factors), we need an additional assumption on the noise $\eta_k(t)$.

\textbf{Assumption 3 (Sub-Gaussian noise):} \textit{ $\forall\ C^{U+V} \in C$ and $\forall k \in [K]$, the noise $n_{k}(t) = r_{k(t)}|\textbf{c}^{V+U}(t) - \bs{c}^{U+V}(t)^{\top} \hat {\mu_k}$ is conditionally
 $\rho$-sub-Gaussian with $\rho \geq 0$, that is for all $t \geq 1$},
\begin{equation*}
\forall\; \lambda \in \mathds{R}, \quad E[e^{\lambda n_k(t)}] \leq \exp\left(\frac{\lambda^2\rho^2}{2}\right) \text{ .}
\end{equation*}

\begin{lem}\label{TH2} (Theorem 2 in \cite{AgrawalG13})  When the measurement noise $n_k(t)$ satisfies Assumption 2, $\forall t \in \{1,...,T\} \quad || \bs{c}_k(t) \leq 1 ||$, $||\bs{\mu} \leq 1||$, $\alpha = \rho \sqrt {9N\log \frac{T}{\delta}}$, and $\epsilon = \frac{1}{\log T}$ the regret $R(T)$
of Thompson Sampling in the Linear bandit problem with $K$ parameters is upper bounded with a probability $1-\delta$ by: 

\begin{equation*}
\label{eq:bound2}
O\left(N\sqrt{K T \log K} (\log T)^{3/2} \log \frac{1}{\delta}\right), \text {  where $0 < \delta < 1$.}
\end{equation*}
\end{lem}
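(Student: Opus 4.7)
The plan is to follow the standard analysis of linear Thompson Sampling, i.e. the argument of Agrawal and Goyal (2013). The proof controls two Gaussian-type concentration events, uses them to bound the instantaneous regret in terms of a matrix norm of the observed context, and then sums those norms via an elliptical-potential argument.

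First I would define two high-probability events at each round $t$. Let $\hat{\bs{\mu}}_k$ be the ridge estimator built from the data so far and $\tilde{\bs{\mu}}_k$ the Thompson sample from $\mathcal{N}(\hat{\bs{\mu}}_k,\alpha^2 A_k^{-1})$. The event $E^{\mu}(t)$ says that for every arm $k$, $|\bs{c}_k(t)^\top(\hat{\bs{\mu}}_k-\bs{\mu}_k)| \le \ell_t\,\|\bs{c}_k(t)\|_{A_k^{-1}}$ with $\ell_t=\rho\sqrt{N\log(t/\delta)}$; this follows from the self-normalized martingale inequality together with Assumption~3. The event $E^{\tilde{\mu}}(t)$ says that $|\bs{c}_k(t)^\top(\tilde{\bs{\mu}}_k-\hat{\bs{\mu}}_k)| \le \gamma_t\,\|\bs{c}_k(t)\|_{A_k^{-1}}$ with $\gamma_t=\alpha\sqrt{4\log(tK)}$, by a Gaussian tail bound and a union bound over the $K$ arms. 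On the intersection of these events, every arm's estimated value lies within $(\ell_t+\gamma_t)\|\bs{c}_k(t)\|_{A_k^{-1}}$ of its true mean.

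Next I would partition the arms into \emph{saturated} arms, whose gap to the optimum exceeds $(\ell_t+\gamma_t)\|\bs{c}_k(t)\|_{A_k^{-1}}$, and \emph{unsaturated} arms. The key step is to show that conditional on the history the Thompson posterior puts at least a constant mass $p>0$ on an arm whose sampled value exceeds the true optimum; this is pure Gaussian anti-concentration and is exactly why $\alpha$ must be inflated to $\rho\sqrt{9N\log(T/\delta)}$. Using this, one argues that the expected instantaneous regret of the actually played arm is bounded by $O(1/p)\cdot(\ell_t+\gamma_t)\,\mathbb{E}[\|\bs{c}_{k(t)}(t)\|_{A_{k(t)}^{-1}}\mid\mathcal{F}_{t-1}]$.

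Summing over $t=1,\dots,T$ and applying the elliptical-potential lemma gives $\sum_t \|\bs{c}_{k(t)}(t)\|_{A_{k(t)}^{-1}} = O(\sqrt{KT\log T})$ after spreading contexts across the $K$ per-arm Gram matrices. Multiplying by $\ell_t+\gamma_t=O(\sqrt{N\log(T/\delta)\log(TK)})$ and a $\log T$ factor that arises from the $\epsilon=1/\log T$ truncation in the saturated/unsaturated analysis recovers the stated rate $O(N\sqrt{KT\log K}(\log T)^{3/2}\log(1/\delta))$. The main obstacle, and where the Agrawal--Goyal argument is subtle, will be upgrading the \emph{conditional-expectation} bound on the saturated-arm regret to a \emph{high-probability} bound over the $T$ rounds: this requires constructing a super-martingale from the gap between the instantaneous regret and its conditional mean, applying an Azuma--Hoeffding-type inequality, and carefully tracking how the failure mass $\delta$ distributes across the two concentration events and the $\log T$ factors.
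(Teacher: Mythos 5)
The paper does not actually prove this lemma: it is imported verbatim as Theorem 2 of Agrawal and Goyal (2013) and used as a black box in the proof of Theorem 2 of the paper, so there is no in-paper argument to compare against. Your sketch is a faithful outline of the original Agrawal--Goyal proof --- the two concentration events, the saturated/unsaturated partition, the Gaussian anti-concentration step that forces the inflated $\alpha$, the elliptical-potential summation, and the super-martingale upgrade to a high-probability bound --- so it matches the approach behind the cited result.
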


We can now derive the following result.

\begin{thm}\label{theorem2}
 When the measurement noise $\eta_k(t)$ satisfies Assumption 2, $\forall t \in \{1,...,T\} \quad || \bs{c}_k(t) \leq 1 ||$, $||\bs{\mu} \leq 1||$, $\alpha = \rho \sqrt {9N\log \frac{T}{\delta}}$, $\lambda(t)=1$ and $\epsilon = \frac{1}{\log T}$ the regret $R(T)$
of CATS (Algorithm 2) is upper bounded with a probability $1-\delta$ by:
\begin{align*} 
 \label{eq:bound2}
O \left( \left((U+V) \sqrt { KT \log K } +UV \sqrt { (N-V) T \log (N-V) } \right)(\log T)^{3/2}\log \frac {1}{\delta}\right).
\end{align*}
\end{thm}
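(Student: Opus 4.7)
The plan is to apply the regret decomposition of Property~\ref{regret_dec}, bound each of its two summands by a single invocation of Lemma~\ref{TH2}, and then union-bound at confidence $1-\delta/2$ for each. Concretely, write $R(T)=R_1(T)+R_2(T)$ with
\begin{align*}
R_1(T) &= \sum_{t=1}^T \bigl[\bs{c}^{V+U^*}(t)^\top \bs{\mu}_{k^*(t)} - \bs{c}^{V+U^*}(t)^\top \bs{\mu}_{k(t)}\bigr], \\
R_2(T) &= \sum_{t=1}^T \Bigl[\sum_{i \in C^{U^*}} \bs{c}^V(t)^\top \bs{\theta}_i - \sum_{i \in C^{U(t)}} \bs{c}^V(t)^\top \bs{\theta}_i\Bigr].
\end{align*}

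For $R_1$, note that $\bs{c}^{V+U^*}(t)$ has at most $V+U$ nonzero coordinates, so the arm-selection sub-problem sits inside a linear contextual bandit of effective dimension $V+U$ with $K$ arms. Lines~11--15 of Algorithm~\ref{alg:CATSO} are exactly linear Thompson Sampling with posterior $\mathcal{N}(\hat{\bs{\mu}}_k,\alpha^2 A_k^{-1})$, and Assumption~3 supplies the sub-Gaussian noise condition needed by Lemma~\ref{TH2}. Substituting dimension $V+U$ in place of $N$, together with the stated choices $\alpha=\rho\sqrt{9N\log(T/\delta)}$ and $\epsilon=1/\log T$, yields the first summand $O((V+U)\sqrt{KT\log K}(\log T)^{3/2}\log(1/\delta))$.

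For $R_2$, Assumption~2 makes the expected combinatorial reward additive over features, so the feature-selection regret decomposes slot-by-slot: it is bounded by $U$ times the regret of selecting the single best feature from the $N-V$ unobserved candidates using context $\bs{c}^V(t)$ of effective dimension $V$. Lines~5--9 of Algorithm~\ref{alg:CATSO} implement precisely this sub-routine via independent linear TS samples $\Tilde{\bs{\theta}}_i \sim \mathcal{N}(\hat{\bs{\theta}}_i,\alpha^2 B_i^{-1})$ followed by a top-$U$ greedy selection; under an additive linear reward decomposition this is the CUCB-style oracle \cite{qin2014contextual} whose top-$U$ combinatorial regret is exactly $U$ copies of the single-feature regret. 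Applying Lemma~\ref{TH2} with dimension $V$ and $N-V$ ``arms'' (features) and multiplying by $U$ yields the second summand $O(UV\sqrt{(N-V)T\log(N-V)}(\log T)^{3/2}\log(1/\delta))$.

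The main obstacle is the coupling hidden inside $R_1$: the algorithm actually picks $k(t)$ using the context $\bs{c}^{V+U(t)}$ rather than $\bs{c}^{V+U^*}(t)$, so the Agrawal--Goyal concentration argument does not literally apply as stated. I would handle this by splitting rounds according to whether $C^{U(t)}=C^{U^*}$: on matching rounds the two contexts coincide and Lemma~\ref{TH2} applies to $R_1$ verbatim, while on mismatched rounds the discrepancy $(\bs{c}^{V+U^*}(t)-\bs{c}^{V+U(t)})^\top \bs{\mu}_{k(t)}$ is a sum of per-feature contributions that, by the additivity of Assumption~2 and the unit norm bounds, can be rebounded up to a constant by the per-round feature-selection error already accounted for inside $R_2$, so no double counting occurs. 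The linearity in both $\bs{\mu}_k$ and $\bs{\theta}_i$ is what makes this gluing step work cleanly, and is the place where the two linear TS instances interact.
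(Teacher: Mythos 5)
Your proof takes essentially the same route as the paper's: decompose the regret via Property~\ref{regret_dec} and apply Lemma~\ref{TH2} twice, once to the arm-selection term as a linear bandit of dimension $U+V$ with $K$ arms, and once per slot to the feature-selection term as $U$ linear bandits of dimension $V$ with at most $N-V$ arms. The coupling issue you flag in your final paragraph (the arm $k(t)$ being chosen from $\bs{c}^{V+U(t)}$ rather than $\bs{c}^{V+U^*}(t)$) is a genuine subtlety that the paper's two-line proof passes over in silence, so your treatment is, if anything, more careful than the published one.
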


\begin{proof}
For upper bounding the left term of the regret (see Property \ref {regret_dec}), we apply Lemma \ref {TH2}, and for upper bounding the right term, which is the regret of Thompson Sampling in $U$ linear bandit problems in $V$ dimensions with respectively $N-V,N-V-1,...,N-V-U-1$ parameters, we apply Lemma \ref {TH2}.
\end {proof}

 Theorem 2 states that the regret of CATS depends on the following two terms: the left term is the regret due to selecting a sub-optimal arm, while the right term is the regret of selecting a sub-optimal subset of features. We can see that there is still a gap between the lower bound of the {\it Context Attentive Bandit} problem and the upper bound of the proposed algorithm. 
The left term of the lower bound scales in $\Omega(\sqrt{(U+V)T)}$, while the left term of the upper bound of CATS scales in $\Tilde{O}((U+V)\sqrt{KT\log K})$, where $\Tilde{O}$ hides logarithmic factor.
The right term of the lower bound scales in $\Omega(U\sqrt{VT})$, while the right term of the upper bound of CATS scales in $\Tilde{O}((U+V)\sqrt{(N-V)T\log (N-V)})$. These gaps are due to the upper bound of regret of CATS, which uses Lemma \ref{TH2}. This suggests that the use of linear bandits based on an upper confidence balls, which scale in $\Tilde {O}\sqrt{dT}$ \cite {abbasi2011improved} ($d$ is the dimension of contexts), could reduce this theoretical gap. As we show in the next section, we choose the Thompson Sampling approach for its better empirical performances.

\section{Experiments}
We compare the proposed CATS algorithm with:
(1) Random-EI: in addition to the $V$ observed features, this algorithm selects a Random subset of features of the specified size $U$ at each Iteration
(thus, Random-EI), and then invokes the linear Thompson sampling algorithm.
(2) Random-fix: this algorithm invokes linear Thompson sampling on a subset of $U+V$ features, where the subset $V$ is randomly selected once prior to seeing any data samples, and remains fixed.
(3) The state-of-art method for context-attentive bandits proposed in \cite{BouneffoufRCF17},  Thompson Sampling with Restricted Context (TSRC):  TSRC solves the CBRC (contextual bandit with restricted context) problem  discussed earlier: at each iteration, the algorithm  decides on $U+V$ features to observe (referred to as {\em unobserved context}). In our setting, however, $V$ out of $N$ features are immediately observed at each iteration (referred to as {\em known context}), then TSRC decision mechanism is used to select $U$ additional unknown features to observe, followed by linear Thompson sampling on U+V features. (4) CALINUCB: where we replace the contextual TS in CATS with LINUCB. (5) CATS-fix: is heuristic where we stop the features exploration after some iterations $T'= 10\%, 20\% .... 90\%$ (we report here the an average over the best results). Empirical evaluation of Random-fix, Random-EI, CATS-fix, CALINUCB,  CATS \footnote{Note that we have used the same exploration parameter value used in \cite{chapelle2011empirical} for TS and LINUCB type algorithms which are $TS \in \{0.25, 0.5, 1\}$ and $LINUCB \in \{ 0.51, 1, 2\}$} and TSRC was performed on several publicly available datasets, as well as  on a proprietary corporate dialog orchestration dataset. 
Publicly available Covertype and CNAE-9 were featured in the original TSRC paper and Warfarin \cite{sharabiani2015revisiting} is a historically popular dataset for evaluating bandit methods.

To simulate the known and unknown context space, we randomly fix $10\%$ of the context feature space of each dataset to be known at the onset and explore a subset of $U$ unknown features. To consider the possibility of nonstationarity in the unknown context space over time, we introduce a weight decay parameter $\lambda(t)$ that reduces the effect of past examples when updating the CATS parameters. We refer to the stationary case as CATS and fix $\lambda(t) = 1$. For the nonstationary setting, we simulate nonstationarity in the unknown feature space by duplicating each dataset, randomly fixing the known context in the same manner as above, and shuffling the unknown feature set - label pairs. Then we stochastically replace events in the original dataset with their shuffled counterparts, with the probability of replacement increasing uniformly with each additional event. We refer to the nonstationary case as NCATS and use $\lambda(t)$ as defined by the GP-UCB algorithm \cite{srinivas2009gaussian}. We compare NCATS to NCATS-fix and NCALINUCB which are the non stationary version of CATS-fix and CALINUCB. we have also compare NCATS to the Weighted TSRC (WTSRC), the nonstationary version of TSRC also developed by \cite{BouneffoufRCF17}. WTSRC makes updates to its feature selection model based only on recent events, where recent events are defined by a time period, or "window" $w$. We choose $w=100$ for WTSRC. We report the total average reward divided by T over 200 trials across a range of $U$ corresponding to various percentages of $N$ for each algorithm in Table \ref{tab:uci}.

\begin{table*}[ht]
\centering
\caption{Total average reward, $O=10\%$}
\label{tab:uci}
\begin{subtable}[ht]{0.4\linewidth}\centering
\caption{Stationary setting}
\label{table:stationary}
\begin{adjustbox}{width=\linewidth}
\begin{tabular}{c|l|l|l}  
 & \multicolumn{3}{c}{\textbf{\textit{Warfarin}}}\\
\hline
{\textit{U} }      & 20\%                        & 40\%                 & 60\%               \\ \hline
 {TSRC}         &      53.28 $\pm$ 1.08      &   57.60 $\pm$ 1.16            &            59.87 $\pm$ 0.69           \\ \hline
{CATS}          &      53.65 $\pm$ 1.21      &   \textbf{ 58.55 $\pm$ 0.67}  &   \textbf{ 60.40 $\pm$ 0.74}     \\ \hline
{CATS-fix}   & \textbf{53.99 $\pm$ 1.02}  &   58.67 $\pm$ 0.65            &          60.07 $\pm$ 0.54    \\ \hline
{CALINUCB}      &         52.17 $\pm$ 0.89   &   57.23 $\pm$ 0.53            &          60.29 $\pm$ 0.66    \\ \hline
{Random-fix}    & 51.05 $\pm$ 1.31 & 53.55 $\pm$ 0.97 & 55.15 $\pm$ 0.83         \\ \hline
{Random-EI}     & 43.65 $\pm$ 1.21 & 48.55 $\pm$ 1.67 & 50.40 $\pm$ 1.33         \\ \hline
\end{tabular}
\end{adjustbox}
\begin{adjustbox}{width=\linewidth}
\begin{tabular}{c|l|l|l} 
  & \multicolumn{3}{c}{\textbf{\textit{Covertype}}} \\ \hline
{\textit{U }}              & 20\%                        & 40\%                        & 60\%             \\ \hline
TSRC              &  54.64 $\pm$ 1.87          &  63.35 $\pm$ 1.87           &  69.59 $\pm$ 1.72           \\ \hline
CATS              & 65.57 $\pm$ 2.17  &  \textbf{72.58 $\pm$ 2.36}   & 78.58 $\pm$ 2.35\\ \hline
CATS-fix       &  \textbf{65.88 $\pm$ 2.01}         &  72.67 $\pm$ 2.13          &  78.55 $\pm$ 2.25\\ \hline
CALINUCB          &  61.99 $\pm$ 1.53          &  72.54 $\pm$ 1.76    &   \textbf{79.69 $\pm$ 1.82}\\ \hline
{Random-fix}      & 53.11 $\pm$ 1.45 & 59.67 $\pm$ 1.07                     & 64.18 $\pm$ 1.03         \\ \hline
{Random-EI}       & 46.15 $\pm$ 2.61 & 52.55 $\pm$ 1.81                     & 55.45 $\pm$ 1.5         \\ \hline
\end{tabular}
\end{adjustbox}
\begin{adjustbox}{width=\linewidth}
\begin{tabular}{c|l|l|l}
                    & \multicolumn{3}{c}{\textbf{\textit{CNAE-9}}}                                                                                                                           \\ \hline
{\textit{U }}  & 20\%                        & 40\%                        & 60\%                                 \\ \hline
TSRC             & \textbf{33.57 $\pm$ 2.43} &       38.62 $\pm$ 1.68          & \textbf{42.05 $\pm$ 2.14} \\ \hline
CATS             & 29.84 $\pm$ 1.82          &  39.10 $\pm$ 1.41 &    40.52 $\pm$ 1.42 \\ \hline
CATS-fix         & 29.82 $\pm$ 1.70          & \textbf{39.57 $\pm$ 1.23} &    41.43 $\pm$ 1.39 \\ \hline
CALINUCB         & 28.53 $\pm$ 1.65          &         38.88 $\pm$ 1.35 &     39.73 $\pm$ 1.36 \\ \hline
{Random-fix}       & 33.01 $\pm$ 1.82 & 37.67 $\pm$ 1.68 & 39.18 $\pm$ 1.52         \\ \hline
{Random-EI}   & 32.05 $\pm$ 2.01 & 36.65 $\pm$ 1.90 & 37.47 $\pm$ 1.75         \\ \hline
\end{tabular}
\end{adjustbox}
\end{subtable}
\begin{subtable}[ht]{0.4\linewidth}\centering
\caption{Nonstationary setting}
\label{table:nonstationary}
\begin{adjustbox}{width=\linewidth}
\begin{tabular}{c|l|l|l}
                    & \multicolumn{3}{c}{\textbf{\textit{Warfarin}}}                                                                                                                       \\ \hline
\textit{U} & 20\%                        & 40\%                        & 60\%                     \\ \hline
WTSRC              & 55.83 $\pm$ 0.55            & 58.00 $\pm$ 0.83          & 59.85 $\pm$ 0.60          \\ \hline
NCATS              & \textbf{59.47 $\pm$ 2.89}   & \textbf{59.34 $\pm$ 2.04} & \textbf{63.26 $\pm$ 0.75} \\ \hline
NCATS-fix          &         59.01 $\pm$ 3.09 &  59.14 $\pm$ 2.33  &  62.42 $\pm$ 0.98 \\ \hline
NCLINUCB           &  58.64 $\pm$ 2.77           &  58.43 $\pm$ 1.89         &  63.01 $\pm$ 0.66  \\ \hline
{Random-fix}       & 43.91 $\pm$ 1.17            & 47.67 $\pm$ 1.08 & 54.18 $\pm$ 1.03         \\ \hline
{Random-EI}   & 47.78 $\pm$ 2.11                 & 52.55 $\pm$ 1.83 & 55.45 $\pm$ 1.54         \\ \hline
\end{tabular}
\end{adjustbox}
\begin{adjustbox}{width=\linewidth}
\begin{tabular}{c|l|l|l}
                    & \multicolumn{3}{c}{\textbf{\textit{Covertype}}   }                                                                                                                   \\ \hline
\textit{U}  & 20\%                        & 40\%                        & 60\%                        \\ \hline
WTSRC               & \textbf{50.26 $\pm$ 1.58} & 58.99 $\pm$ 1.81             & 64.91 $\pm$  1.38         \\ \hline
NCATS                  & 48.50 $\pm$ 1.05          & \textbf{68.17 $\pm$ 3.14} & \textbf{83.78 $\pm$ 5.51} \\ \hline
NCATS-fix              & 49.87 $\pm$ 1.20          & 68.04 $\pm$ 3.24       & 82.98 $\pm$ 5.83 \\ \hline
NCLINUCB               & 48.12 $\pm$ 0.99          &         68.20 $\pm$ 3.11  &  83.91 $\pm$ 5.21 \\ \hline
{Random-fix}       & 43.11 $\pm$ 3.05 & 49.67 $\pm$ 2.77 & 53.18 $\pm$ 2.33         \\ \hline
{Random-EI}   & 44.45 $\pm$ 4.44 & 46.65 $\pm$ 3.88 & 53.45 $\pm$ 3.61         \\ \hline
\end{tabular}
\end{adjustbox}

\begin{adjustbox}{width=\linewidth}
\begin{tabular}{c|l|l|l}
                    & \multicolumn{3}{c}{\textbf{\textit{CNAE-9}}}                                                                                                                           \\ \hline
\textit{U}  & 20\%                        & 40\%                        & 60\%                   \\ \hline
WTSRC               & 19.91 $\pm$ 2.67               & 30.86 $\pm$ 2.92          & 36.01 $\pm$ 2.88       \\ \hline
NCATS               &         30.88 $\pm$ 0.96       & \textbf{34.91 $\pm$ 1.93} & \textbf{42.04 $\pm$ 1.52} \\ \hline
NCATS-fix        &      29.92 $\pm$ 1.06          &        33.43 $\pm$ 1.83 &          40.04 $\pm$ 1.51 \\ \hline
NCLINUCB            &         \textbf{31.07 $\pm$ 0.87} &         34.61 $\pm$ 1.73  &         41.81 $\pm$ 1.62 \\ \hline
{Random-fix}        & 13.01 $\pm$ 3.45 & 21.77 $\pm$ 3.08 & 24.18 $\pm$ 2.43         \\ \hline
{Random-EI}   & 16.15 $\pm$ 2.44 & 22.55 $\pm$ 2.18 & 25.45 $\pm$ 2.15         \\ \hline
\end{tabular}
\end{adjustbox}
\end{subtable}
\end{table*}
The results in Table \ref{tab:uci} are promising, with {\em our methods outperforming the state of the art in the majority of cases across both settings}. The most notable exception is found for CNAE-9 dataset, where CATS sometimes outperforms or nearly matches TSRC performance. This outcome is somewhat expected, since in the original work on TSRC \cite{BouneffoufRCF17}, the mean error rate of TSRC was only $0.03\%$ lower than the error corresponding to randomly fixing a subset of unknown features to reveal for each event on CNAE-9. This observation suggests that, for this particular dataset, there may not be a subset of features which would be noticeably more predictive of the reward than the rest of the features. We also observe that the LINUCB version of CATS has comparable performance with CATS with slight advantage to CATS. Another observation is that CATS-fix is performing better than CATS in some situations, the explanation could be that after finding the best features the algorithm do not need to explore anymore and focus on finding the best arms based on these featues.


\begin{figure}[ht]
\vspace{-0.15in}
\centering
\begin{subfigure}{0.350\linewidth}
  \centering
  \includegraphics[width=\linewidth]{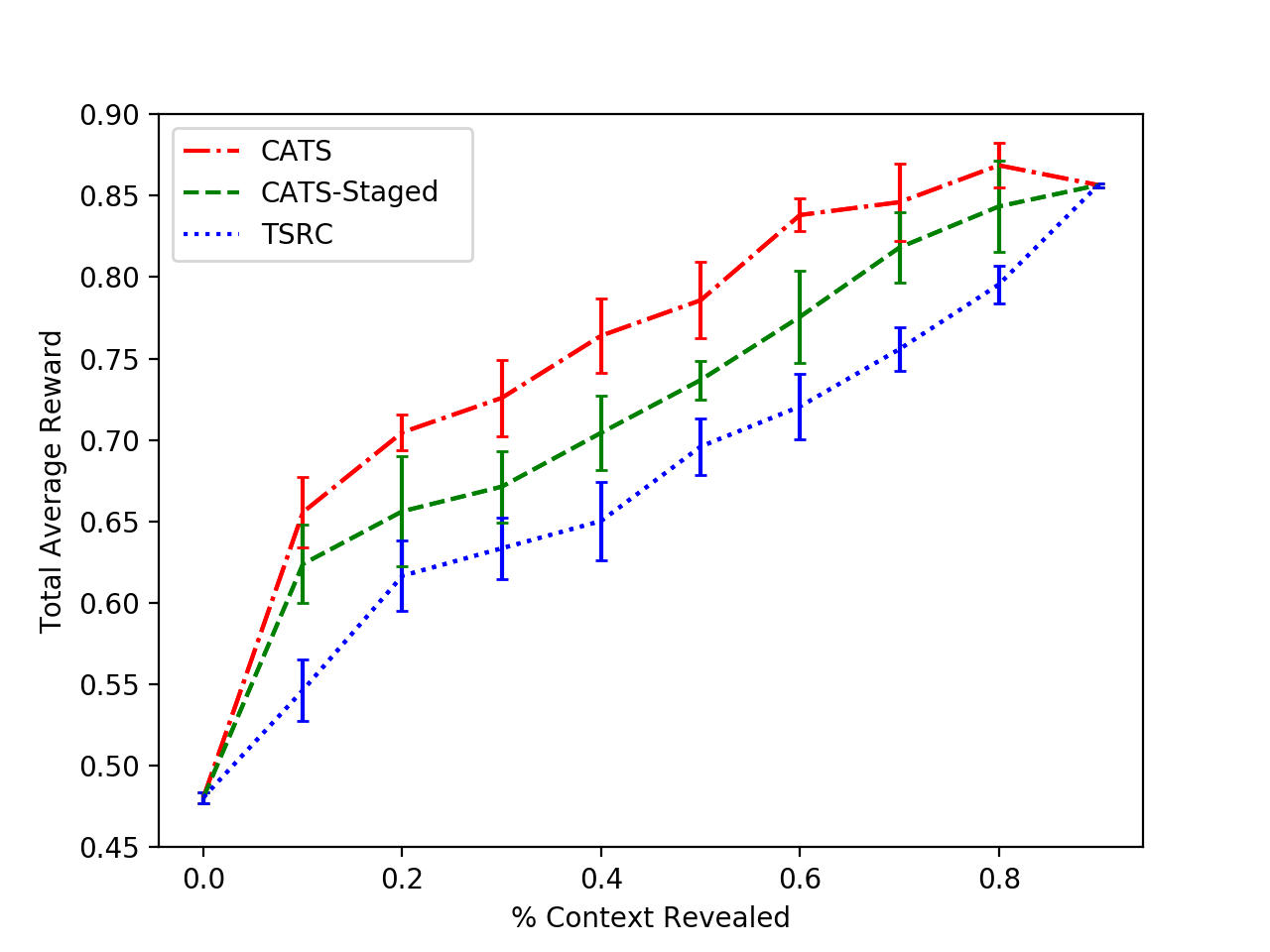}
  \caption{Stationary Setting}
  \label{fig:covS}
\end{subfigure}%
\begin{subfigure}{0.350\linewidth}
  \centering
  \includegraphics[width=\linewidth]{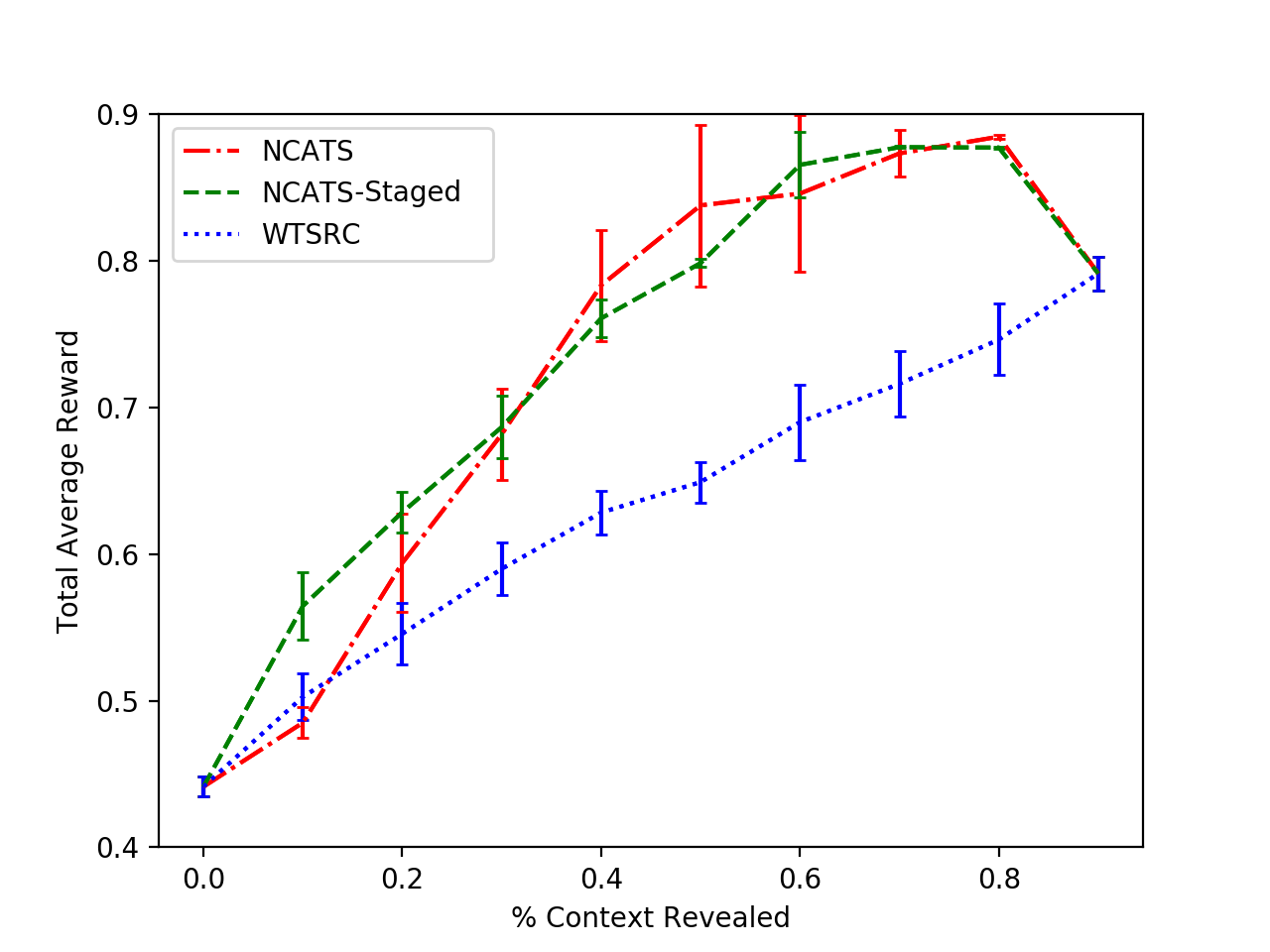}
  \caption{Nonstationary Setting}
  \label{fig:covN}
\end{subfigure}
\caption{Total Average Reward for Covertype}
\end{figure}

We perform a deeper analysis of the Covertype dataset, examining multi-staged selection of the $U$ unknown context feature sets. In CATS, the known context is used to select all $U$ additional context feature sets at once. In a multi-staged approach, the known context grows and is used to select each of the $U$ additional context features incrementally (one feature at a time). Maintaining $\lambda(t)=1$, for the stationary case we denote these two cases of the CATS algorithm as CATS and CATS-Staged respectively  and report their performance when $10\%$ of the context is randomly fixed, across various $U$ in Figure \ref{fig:covS}. Note that when the remaining 90\% of features are revealed, the CATS and TSRC methods all reduce to simple linear Thompson sampling  with the full feature set. Similarly, when 0 additional feature sets are revealed, the methods all reduce to linear Thompson sampling with a sparsely represented known context. Observe that CATS consistently outperforms CATS-Staged across all $U$ tested. CATS-Staged likely suffers because incremental feature selection adds nonstationarity to the known context - CATS  learns relationships between the known and unknown features while CATS-Staged learns relationships between them as the known context grows. Nonetheless, both methods outperform TSRC. In the nonstationary case we use the GP-UCB algorithm for $\lambda(t)$, refer to the single and multi-staged cases as NCATS and NCATS-Staged, and illustrate their performance in Figure \ref{fig:covN}. Here we observe that NCATS and NCATS-Staged have comparable performance, and the improvement gain over baseline, in this case WTSRC, is even greater than in the stationary case. 

Next we evaluate our methods on \textit{Customer Assistant}, a proprietary multi-skill dialog orchestration dataset. Recall that this kind of application motivates the CAB setting because there is a natural divide between the known and unknown context spaces; the query and its associated features are known at the onset and the potential responses and their associated features are only known for the domain specific agents the query is posed to. 
\textit{The Customer Assistant} orchestrates 9 domain specific agents which we arbitrarily denote as $Skill_{1}, \ldots,  Skill_{9}$ in the discussion that follows. In this application, example skills lie in the domains of payroll, compensation, travel, health benefits, and so on. In addition to a textual response to a user query, the skills orchestrated by \textit{Customer Assistant} also return the following features: an \textit{intent}, a short string descriptor that categorizes the perceived intent of the query, and a \textit{confidence}, a real value between 0 and 1 indicating how confident a skill is that its response is relevant to the query. Skills have multiple intents associated with them. The orchestrator uses all the features associated with the query and the candidate responses from all the skills to choose which skill should carry the conversation.

The Customer Assistant dataset contains 28,412 events associated with a correct skill response. We encode each query by averaging 50 dimensional GloVe word embeddings \cite{pennington2014glove} for each word in each query and for each skill we create a feature set consisting of its confidence and a one-hot encoding of its intent. The skill feature set size for $Skill_{1}, \ldots,  Skill_{9}$ are 181, 9, 4, 7, 6, 27, 110, 297, and 30 respectively. We concatenate the query features and all of the skill features to form a 721 dimensional context feature vector for each event in this dataset. Recall that there is no need for simulation of the known and unknown contexts; in a live setting the query features are immediately calculable or known, whereas the confidence and intent necessary to build a skill's feature set are unknown until a skill is executed. Because the confidence and intent for a skill are both accessible post execution, we reveal them together. We accommodate this by slightly modifying the objective of CATS to reveal $U$ unknown skill feature sets instead of $U$ unknown individual features for each event. 
We perform a deeper analysis of the \textit{Customer Assistant} dataset, examining multi-staged selection of the $U$ unknown context feature sets. Maintaining $\lambda(t)=1$, for the stationary case the results are summarized in Figure \ref{fig:chipS}. Here both CATS-Staged and CATS methods outperform TSRC by a large margin. 

\begin{figure}[ht]
\centering
\begin{subfigure}{0.35\linewidth}
  \centering
  \includegraphics[width=\linewidth]{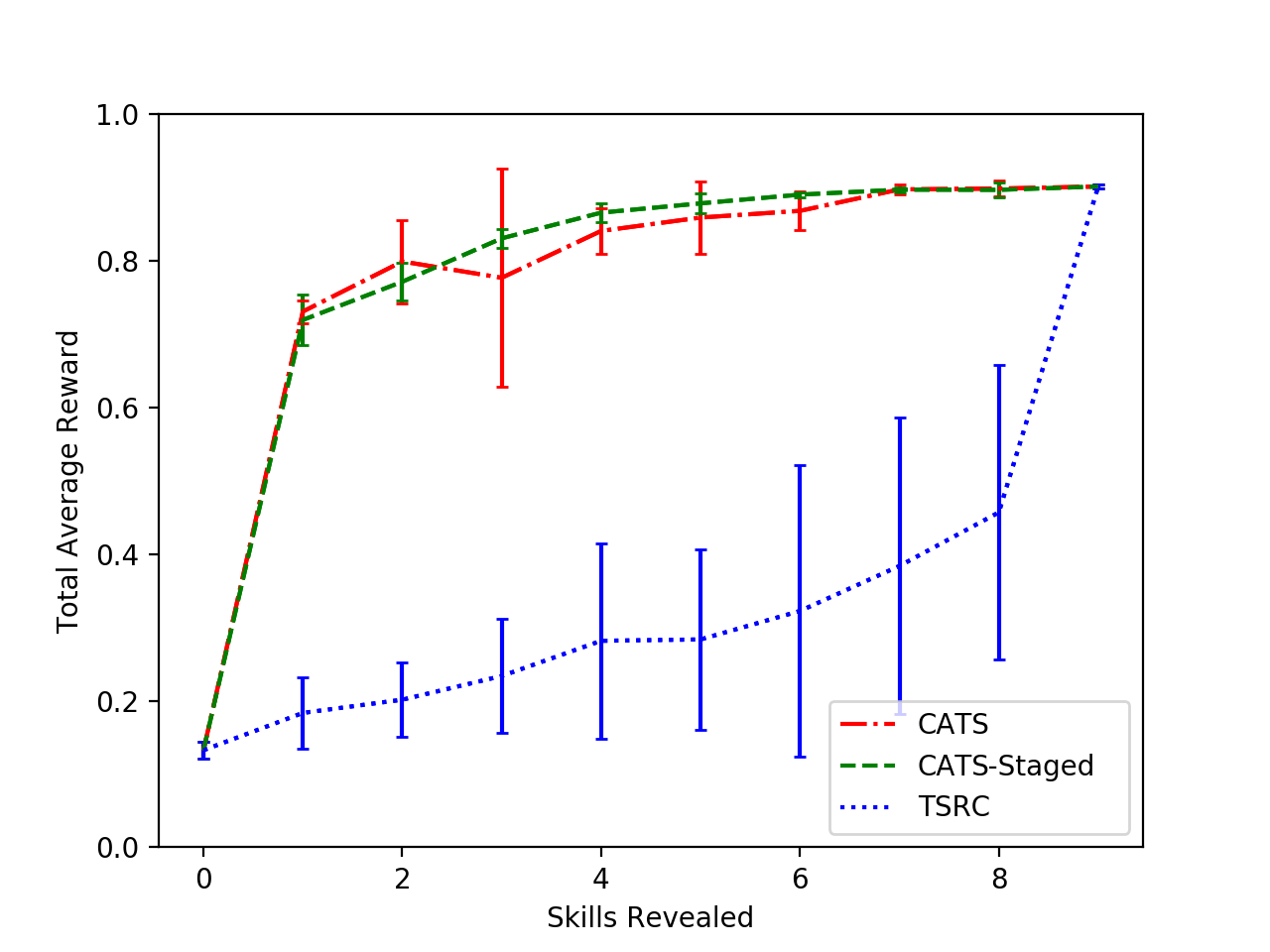}
  \caption{Stationary Setting}
  \label{fig:chipS}
\end{subfigure}%
\begin{subfigure}{0.35\linewidth}
  \centering
  \includegraphics[width=\linewidth]{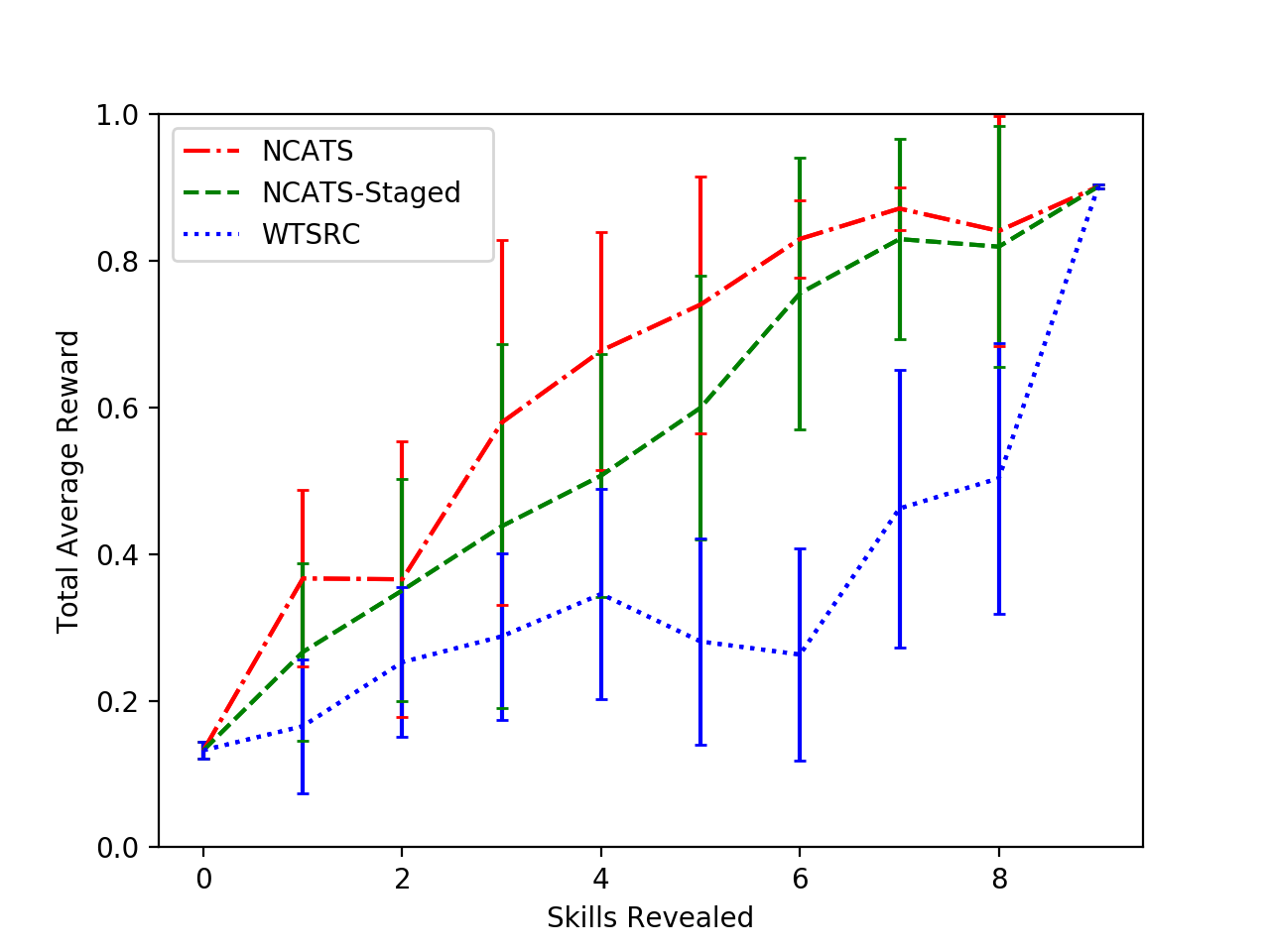}
  \caption{Nonstationary Setting}
  \label{fig:chipN}
\end{subfigure}
\caption{Total Average Reward for Customer Assistant}
\end{figure}
For the nonstationary case we simulate nonstationarity in the same manner as the publicly available datasets, except using the natural partition of the query features as the known context and the skill feature sets as the unknown context instead of simulated percentages. We use the GP-UCB algorithm for $\lambda(t)$ and illustrate the performance of NCATS and NCATS-Staged alongside WTSRC in Figure \ref{fig:chipN}. Here we observe that NCATS slightly outperforms NCATS-Staged, and both outperform the WTSRC baseline.  

\section{Conclusions and Future Work}
We have introduced here a novel bandit problem with only partially observable context and the option of requesting a limited number of additional observations. We also propose an algorithm, designed to take an advantage of the initial partial observations in order to improve its choice of which additional features to observe, and demonstrate its advantages over the prior art, a standard context-attentive bandit with no partial observations of the context prior to feature selection step. Our problem setting is motivated by several realistic  scenarios, including medical applications as well as multi-domain dialog systems. 
Note that our current formulation assumes that all unobserved features have equal observation cost. However, a more practical assumption is that some features may be more costly than others; thus, in our future work, we plan to expand this notion of budget to accommodate more scenarios involving different feature costs.  

\section{Broader Impact}
This problem has broader impacts in several domains such as voice assistants, healthcare and e-commerce.
\begin{itemize} 

\item \textit{Better medical diagnosis.} In a clinical setting, it is often too costly or infeasible to conduct all possible tests; therefore, given the limit on the number of tests, the doctor must decide which subset of tests will result into maximally effective treatment choice in an iterative manner. A doctor may first take a look at patient's medical record to decide which medical test to perform, before choosing a treatment plan. 

\item \textit{Better user preference modeling.} Our approach can help to develop better chatbots and automated personal assistants. For example, following a request such as, for example, "play music", an AI-based home assistant must learn to ask several follow-up questions (from a list of possible questions) to better understand the intent of a user and to remove ambiguities: e.g., what type of music do you prefer (jazz, pop, etc)? Would you like it on hi-fi system or on TV? And so on. Another example: a support desk chatbot, in response to user’s complaint ("My Internet connection is bad") must learn to ask a sequence of appropriate questions (from a list of possible connection issues): how far is your WIFI hotspot? Do you have a 4G subscription? These scenarios are well-handled by the framework we proposed in this paper.

\item \textit{Better recommendations.} Voice assistants and recommendation systems in general tend to lock us in our preferences, which can have deleterious effects: e.g.,  recommendations  based only on the past history of user's choices may reinforce certain undesirable tendencies, e.g., suggesting an online content based  on a user's with particular bias (e.g.,  racist, sexist, etc). On the contrary,  our approach could potentially help a user   to break out of this loop, by  suggesting the items (e.g. news)  on  additional questions (additional features) which can be used to broaden user's horizons.
\end{itemize} 

\bibliographystyle{named}
\interlinepenalty=10000
\bibliography{neurips20}

\end{document}